\PassOptionsToPackage{table}{xcolor}
\documentclass{article}

\usepackage{iclr2027_conference,times}
\usepackage[T1]{fontenc}
\usepackage{amsmath,amsfonts,bm}

\def\eqref#1{equation~\ref{#1}}

\def\1{\bm{1}}

\DeclareMathAlphabet{\mathsfit}{\encodingdefault}{\sfdefault}{m}{sl}
\SetMathAlphabet{\mathsfit}{bold}{\encodingdefault}{\sfdefault}{bx}{n}

\usepackage{url}
\usepackage{graphicx}
\usepackage{amsmath}
\usepackage{amssymb}
\usepackage{amsthm}
\usepackage{booktabs}
\usepackage{makecell}
\usepackage{multirow}
\usepackage{array}
\usepackage{tabularx}
\usepackage{placeins}
\usepackage{float}
\usepackage{wrapfig}
\usepackage{needspace}
\usepackage{capt-of}
\usepackage{xcolor}
\usepackage[normalem]{ulem}
\usepackage{algorithm}
\usepackage{algpseudocode}
\usepackage{hyperref}
\hypersetup{hidelinks}

\newtheoremstyle{carmplain}{6pt}{6pt}{\itshape}{}{\bfseries}{.}{.5em}
  {\thmname{#1}\thmnumber{ #2}\thmnote{ \textbf{(#3)}}}
\theoremstyle{carmplain}
\newtheorem{proposition}{Proposition}
\newtheorem{corollary}{Corollary}

\newcommand{\method}{CARM}
\newcommand{\methodours}{\method{} (ours)}
\newcommand{\ind}{\mathbf{1}}
\newcommand{\bestresult}[1]{\ensuremath{\mathbf{#1}}}
\newcommand{\secondbest}[1]{\underline{#1}}
\newcommand{\relatedheading}[1]{%
  \noindent{\fontencoding{T1}\fontfamily{ptm}\fontseries{b}\selectfont #1}}

\title{CARM: Cancellation-Aware Response\\
Masking for LLM Reinforcement Learning}

\author{Yafei Zhang \quad Songshuo Lu \quad Sicong Liao \quad Zhi Chen \quad Yaohua Tang\\[0.4em]
\normalfont Moore Threads AI}

\iclrfinalcopy

\date{}
\hypersetup{pdftitle={CARM: Cancellation-Aware Response Masking for LLM Reinforcement Learning},pdfauthor={Yafei Zhang, Songshuo Lu, Sicong Liao, Zhi Chen, Yaohua Tang}}
\fancypagestyle{carmfirst}{%
  \fancyhf{}%
  \fancyhead[L]{\includegraphics[height=0.65cm]{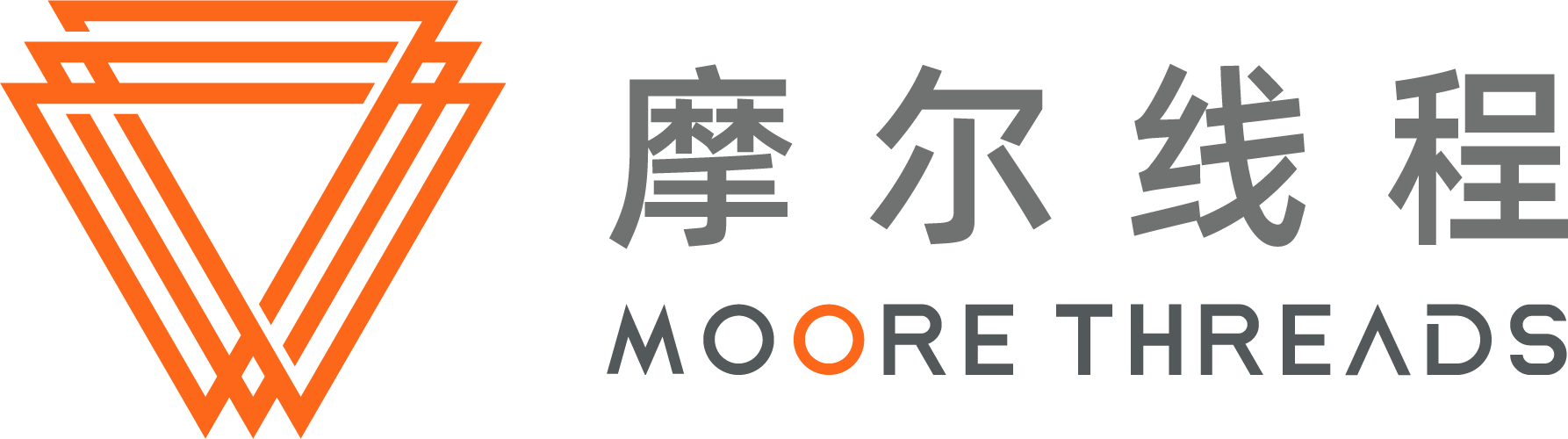}}%
  \fancyfoot[C]{\thepage}%
  \renewcommand{\headrulewidth}{0.8pt}%
  \renewcommand{\footrulewidth}{0pt}%
}

\begin{document}
\maketitle
\fancyhf{}
\fancyfoot[C]{\thepage}
\renewcommand{\headrulewidth}{0pt}
\renewcommand{\footrulewidth}{0pt}
\thispagestyle{carmfirst}

\begin{abstract}
Recent years have witnessed the rapid adoption of reinforcement learning (RL)
in large language model (LLM) post-training, with substantial gains in
mathematical reasoning and code generation. In practical systems, however,
policy updates and differences between rollout and training engines can make
sampled responses off-policy. Sequence-level masking addresses this
mismatch by deciding whether an entire response should contribute to
optimization. A common masking rule uses the length-normalized geometric mean
of sampled token probability ratios. Its signed log-ratios can cancel across
positions, concealing substantial bidirectional policy drift. We propose
\emph{Cancellation-Aware Response Masking} (\method), a sequence-level mask
that takes the absolute value of each token log-ratio before averaging,
preventing opposing probability changes from canceling. We prove that
accepted responses satisfy a joint bound on the fraction of sampled-token
ratios outside a prescribed band and their mean log-distance beyond its
boundaries. Experiments on mathematical reasoning and code generation
show that \method{} improves mean@16
averaged over AIME 2024/2025/2026 and BeyondAIME by up to $3.13$ percentage
points over geometric-mean masking, and increases average pass@1 across four
code benchmarks by $2.88$ points over the strongest evaluated baseline.
These findings support \method{} as a theoretically grounded and effective
method for response-level off-policy control in LLM reinforcement learning.
\end{abstract}

\section{Introduction}
\label{sec:introduction}

Reinforcement learning (RL) is widely used to improve the reasoning ability of
large language models (LLMs), building on advances in preference-based
alignment and execution-feedback learning \citep{ouyang2022instructgpt,le2022coderl}.
In an ideal on-policy setting, responses are generated by the same policy
that is being optimized. In practical systems, however, several effects
introduce policy mismatch. Mini-batch optimization makes stored responses
stale as the policy changes, while actor--learner latency can delay their
use. High-throughput rollout engines may also differ from the training
backend in batching, numerical precision, and kernel implementations
\citep{yao2025offpolicy,sheng2025hybridflow}. Consequently, the policy being
updated can assign substantially different probabilities to sampled tokens
than the rollout policy that generated them.

Proximal Policy Optimization (PPO) and Group Relative Policy Optimization
(GRPO) regulate policy updates through clipping in their token-level
surrogate losses
\citep{schulman2017ppo,shao2024deepseekmath,guo2025deepseek}.
Clipping changes how individual tokens contribute to the loss, but does not
keep every current-to-rollout probability ratio within a prescribed interval.
It also does not decide whether an entire response should participate in
the update. Sequence-level masking complements clipping by making this
decision from a score that summarizes mismatch across the response.
Trust Region Masking (TRM), for example, motivates such filtering through
an analysis of how policy mismatch affects optimization over long
autoregressive sequences \citep{li2026trust}.

A simple response-level criterion is geometric-mean masking: it takes the
geometric mean of the token probability ratios and retains responses when
this aggregate is close to one. We refer to this baseline as \emph{GeoMean}.
In log space, this aggregation corresponds to averaging signed token
log-ratios, as used in DeepSeek-V3.2's off-policy sequence mask
\citep{liu2025deepseek}. The problem is that positive and negative
log-ratios can cancel. For example, token ratios of $10$ and $0.1$ have a
geometric mean of one, although each sampled-token probability has changed
tenfold. More generally, a response can appear well aligned under this
score because its probability changes balance out, rather than because
those changes are small.

We propose \emph{Cancellation-Aware Response Masking} (\method), a
sequence-level mask that takes the absolute value of each token log-ratio
before averaging. This prevents opposing probability changes from canceling
while treating reciprocal changes symmetrically.
Figure~\ref{fig:null_empirical_case} illustrates the distinction on a real
8,619-token response: substantial positive and negative log-ratio
contributions largely cancel under GeoMean, whereas \method{} retains their
magnitudes. \method{} changes which responses contribute to optimization
while retaining the token-level PPO/GRPO surrogate.

Beyond preventing cancellation, \method{} gives the response-level
threshold a direct token-level interpretation. We prove that every accepted
response satisfies a joint bound on the fraction of sampled-token ratios
outside a prescribed, possibly asymmetric band and their mean log-distance
beyond its boundaries. Experiments on mathematical reasoning and code
generation show improvements over token- and sequence-level baselines.
Threshold analyses further show that similar filtering rates can yield
different accuracies, while filtering more responses is not consistently
better.

\begin{figure}[t]
    \centering
    \includegraphics[width=\linewidth]{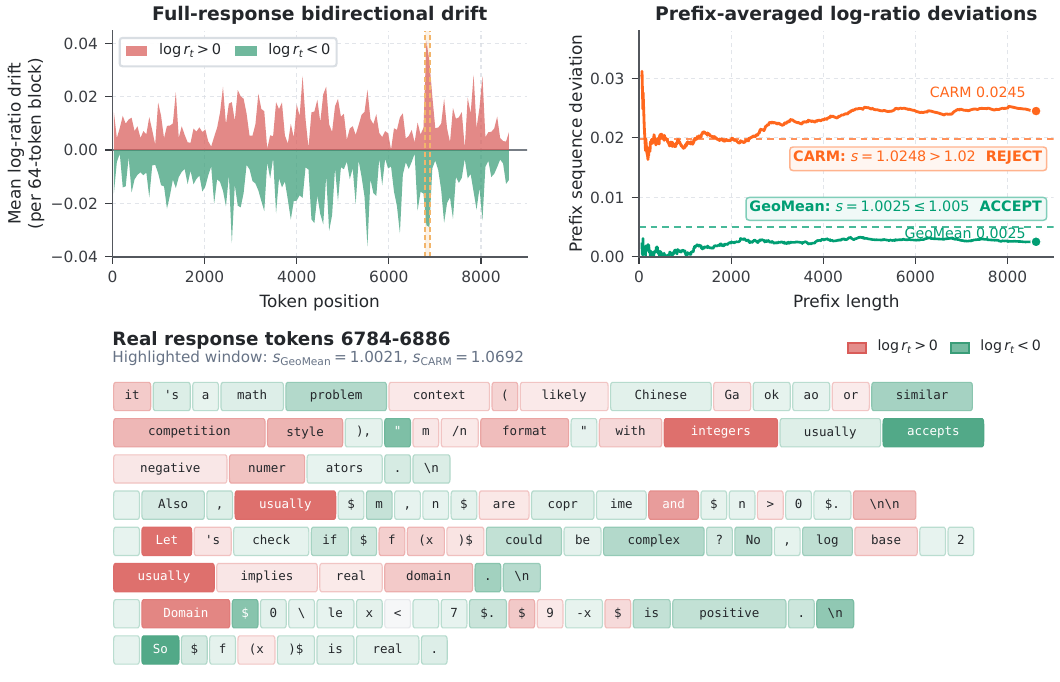}
    \caption{A real unfiltered rollout that GeoMean accepts and \method{}
    rejects. This 8,619-token response has scores
    $s_{\mathrm{GeoMean}}=1.0025\le1.005$ and
    $s_{\mathrm{CARM}}=1.0248>1.02$. \emph{Top:} positive and negative
    log-ratio mass and prefix deviations. \emph{Bottom:} a 103-token window;
    red tokens have $\log r_t>0$, green tokens have $\log r_t<0$, and color
    intensity encodes magnitude.}
    \label{fig:null_empirical_case}
\end{figure}

Our contributions are:
\begin{itemize}
    \item We formalize cancellation in signed-log sequence masking: the
    aggregate score can indicate perfect apparent alignment even when every
    sampled token ratio violates a prescribed band.
    \item We propose \method, a sequence-level mask that prevents cancellation
    by taking absolute values before averaging token log-ratios. We derive a
    joint bound on the frequency and severity of ratio-band violations in
    accepted responses, including asymmetric bands, giving the filtering
    threshold a token-level interpretation.
    \item Experiments on mathematical reasoning and code generation show
    improvements over sequence- and token-level baselines. Paired-response
    diagnostics and threshold analyses characterize the selection behavior:
    higher accuracy is observed at comparable average filtering rates, and
    accuracy is not monotone in the masked fraction.
\end{itemize}

\section{Related Work}
\label{sec:related_work}

\relatedheading{Policy optimization for LLM post-training.}
Trust Region Policy Optimization (TRPO) constrains Kullback--Leibler (KL)
divergence, while PPO uses a clipped-ratio surrogate
\citep{schulman2015trpo,schulman2017ppo}. GRPO replaces a learned critic with
within-group relative advantages \citep{shao2024deepseekmath}.
Answer correctness and program execution provide reward signals for improving
mathematical reasoning and code generation
\citep{le2022coderl,dou2024stepcoder,guo2025deepseek,yang2025qwen3,glm2026glm5},
complementing inference-time reasoning and program-aided prompting
\citep{wei2022cot,chen2023pot,gao2023pal}.
Decoupled Clip and Dynamic sAmpling Policy Optimization (DAPO) refines
clipping and sampling \citep{yu2025dapo}; related work revisits advantage
normalization, sequence-level objectives, and reward--policy learning
\citep{hu2025reinforcepp,xie2025spo,lu2026urpo}.
These choices determine how sampled responses contribute to optimization.
\method{} addresses the complementary question of whether a sampled response
should contribute to the current policy update.

\relatedheading{Off-policy and token-level trust-region control.}
This selection problem arises even in nominally online training: actor--learner
delay, repeated updates, and rollout--training engine differences separate
the policy that generates responses from the policy being optimized
\citep{espeholt2018impala,yao2025offpolicy,sheng2025hybridflow}.
Importance weighting and token-level filtering control individual update
contributions \citep{espeholt2018impala,ling2025ring1t}. Adaptive clipping
balances advantage contributions and mitigates entropy loss
\citep{xi2026bapo}, while divergence-based controls constrain update size
\citep{qi2026dppo,yao2026drpo}. These mechanisms regulate the update within a
response. \method{} instead decides whether to retain the complete response,
separating response selection from the surrogate applied to retained tokens.

\relatedheading{Sequence-level correction and masking.}
Response-level selection must summarize mismatch across an entire sampled
path. Length-normalized geometric aggregation removes the direct length
dependence of multiplying token ratios \citep{liu2025rlcollapse}; it also
underlies sequence-level clipping in Group Sequence Policy Optimization
(GSPO) \citep{zheng2025gspo}.
DeepSeek-V3.2 explicitly uses off-policy sequence masking based on the mean
signed log-ratio, equivalently the logarithm of a length-normalized
geometric-mean ratio, to stabilize GRPO training \citep{liu2025deepseek}.
Its rule applies a one-sided threshold to negative-advantage responses.
Our GeoMean baseline uses
the same signed-log aggregation with reciprocal-band acceptance
(Eq.~\ref{eq:vanilla_mask}); we examine advantage conditioning separately
in Section~\ref{sec:negative_advantage_ablation}.
TRM takes a
worst-position view, rejecting a response when its maximum token-level
divergence violates a long-horizon trust region \citep{li2026trust}.
\method{} targets a different limitation: geometric aggregation averages
signed log-ratios, so opposing shifts can cancel. Averaging their absolute
values measures bidirectional drift rather than net displacement. The
resulting mask changes which responses contribute to the update while
retaining the token-level policy surrogate.

\section{Method}
\label{sec:method}

\subsection{Problem Setup}
\label{sec:problem_setup}

Let $x$ be a prompt and let
$y_i=(y_{i,1},\ldots,y_{i,T_i})$ be the $i$-th sampled response.
We use \emph{response} and \emph{sequence} interchangeably for these generated tokens.
We consider nonempty responses ($T_i\ge1$) with finite token
log-probabilities, so $0<r_{i,t}<\infty$.
The token-level ratio of the current policy to the rollout policy is
\begin{equation}
r_{i,t}(\theta)
=
\frac{\pi_\theta(y_{i,t}\mid x,y_{i,<t})}
{\pi_{\mathrm{rollout}}(y_{i,t}\mid x,y_{i,<t})}.
\label{eq:ratio}
\end{equation}
For a sequence-level advantage $\hat A_i$, the conventional asymmetric
PPO/GRPO token surrogate is
\begin{equation}
\ell_{i,t}(\theta)
=
\min\Big(
r_{i,t}(\theta)\hat A_i,
\operatorname{clip}\big(
r_{i,t}(\theta),
1-\epsilon_{\mathrm{low}},
1+\epsilon_{\mathrm{high}}
\big)\hat A_i
\Big).
\label{eq:token_surrogate}
\end{equation}
For a binary response mask $M_i\in\{0,1\}$, averaging the retained token
losses gives
\begin{equation}
\mathcal L_{\mathrm{masked}}(\theta)
=
-
\frac{
\sum_i M_i\sum_{t=1}^{T_i}\ell_{i,t}(\theta)
}{
\sum_i M_iT_i
}.
\label{eq:masked_objective}
\end{equation}
The policy-loss contribution is defined as zero if no response is retained.
\method{} specifies the detached mask independently of loss normalization;
Appendix~\ref{app:implementation} states the convention used in the experiments.

\subsection{GeoMean Masking and Ratio Cancellation}
\label{sec:vanilla}

Let $G(y_i)=\exp(\frac{1}{T_i}\sum_t\log r_{i,t})$ denote the geometric-mean
token ratio. We define its log-space deviation and reciprocal-symmetric score as
\begin{equation}
d_{\mathrm{GeoMean}}(y_i)
=
\left|
\frac{1}{T_i}
\sum_{t=1}^{T_i}\log r_{i,t}
\right|,
\qquad
s_{\mathrm{GeoMean}}(y_i)=\exp(d_{\mathrm{GeoMean}}(y_i)).
\label{eq:vanilla_score}
\end{equation}
The response is retained when
\begin{equation}
M_i^{\mathrm{GeoMean}}
=
\ind\!\left[
s_{\mathrm{GeoMean}}(y_i)\le\tau_{\mathrm{GeoMean}}
\right].
\label{eq:vanilla_mask}
\end{equation}
For $\tau_{\mathrm{GeoMean}}\ge1$, this requires the geometric-mean ratio to lie in the
reciprocal interval
$[1/\tau_{\mathrm{GeoMean}},\tau_{\mathrm{GeoMean}}]$.
Equivalently, $d_{\mathrm{GeoMean}}=|\log G|$ and
$s_{\mathrm{GeoMean}}=\max(G,G^{-1})$.
We use GeoMean as a descriptive shorthand for this two-sided rule.
It shares signed-log aggregation with DeepSeek-V3.2,
whose mask instead uses a one-sided threshold on negative-advantage responses
\citep{liu2025deepseek}.

\begin{proposition}[Cancellation under signed aggregation]
\label{prop:cancellation}
For any $B>1$ and any $1<c<B$, there exists an even-length response for
which every token ratio lies outside $[1/c,c]$, while
$s_{\mathrm{GeoMean}}(y)=1$.
\end{proposition}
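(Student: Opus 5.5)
The construction is explicit: pair each increase with a reciprocal decrease so the signed log-ratios cancel exactly. Fix $B>1$ and $1<c<B$, and choose any ratio value $b$ with $c<b\le B$, for instance $b=B$ or $b=(c+B)/2$. The role of $B$ is presumably to show that the violation can be made as large as we like while each ratio stays bounded by $B$. For any integer $m\ge1$, take a response of even length $T=2m$ whose token ratios are
\[
r_{1}=\cdots=r_{m}=b,\qquad r_{m+1}=\cdots=r_{2m}=1/b .
\]
The ordering of the two groups is irrelevant to the score.

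The first step is realizability. Since the proposition only concerns the ratio sequence, we need a pair of policies and a sampled path that produce these ratios. Take a vocabulary of at least two tokens and, at each position, a rollout distribution that assigns the sampled token probability $p\in(0,1)$. Let the current policy assign that token probability $bp$ at the first $m$ positions and $p/b$ at the last $m$, with $p<1/B$ so that $bp<1$. Renormalize the remaining tokens at each position so both conditionals are valid distributions. All probabilities are then strictly positive and finite, as required by the problem setup.

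The second step checks the two claims.
\begin{itemize}
\item \emph{Band violation.} Each ratio is either $b>c$ or $1/b<1/c$, so every token ratio lies outside $[1/c,c]$.
\item \emph{Perfect apparent alignment.} We have $\sum_t\log r_t=m\log b-m\log b=0$. Hence $d_{\mathrm{GeoMean}}(y)=0$ by Eq.~\ref{eq:vanilla_score}, and $s_{\mathrm{GeoMean}}(y)=\exp(0)=1$.
\end{itemize}
Consequently the response is accepted by Eq.~\ref{eq:vanilla_mask} for every $\tau_{\mathrm{GeoMean}}\ge1$.

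There is no substantive obstacle. The only point needing care is realizability: the ratios must come from genuine conditional distributions with positive probabilities. This is handled by choosing the rollout probability $p$ small enough that $bp<1$, as above. Evenness of the length is exactly what permits exact pairing of $b$ with $1/b$.
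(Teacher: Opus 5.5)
Your proof is correct and is essentially the paper's construction: the paper alternates ratios $B$ and $1/B$ over an even-length response so the signed log-ratios cancel pairwise. Grouping the two halves and allowing any $b\in(c,B]$ makes no difference, and the paper shows realizability by swapping probabilities $1/(1+B)$ and $B/(1+B)$ between the two policies rather than by your choice of a small rollout probability $p<1/B$.
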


\begin{proof}
Let the ratios alternate between $B$ and $1/B$.
Every token is outside $[1/c,c]$, but the log-ratios alternate between
$\log B$ and $-\log B$.
Their sum is zero, so $d_{\mathrm{GeoMean}}(y)=0$ and
$s_{\mathrm{GeoMean}}(y)=1$.
\end{proof}

Proposition~\ref{prop:cancellation} does not claim that the exact trajectory
likelihood ratio is incorrect.
Indeed,
$\pi_\theta(y\mid x)/\pi_{\mathrm{rollout}}(y\mid x)=\prod_t r_t$.
It shows that thresholding a signed aggregate does not bound the magnitudes
of the sampled-token log-ratios when used as a response-level filter.
Figure~\ref{fig:null_empirical_case} shows this cancellation in an
actual unfiltered rollout.

\subsection{Cancellation-Aware Response Masking}
\label{sec:carm}

\method{} removes direction before aggregation:
\begin{equation}
d_{\mathrm{CARM}}(y_i)
=
\frac{1}{T_i}
\sum_{t=1}^{T_i}
\left|\log r_{i,t}\right|,
\qquad
s_{\mathrm{CARM}}(y_i)=\exp(d_{\mathrm{CARM}}(y_i)).
\label{eq:carm_score}
\end{equation}
Equivalently, each token contributes the reciprocal-invariant mismatch
$\max(r_{i,t},r_{i,t}^{-1})$ in ratio space, since
$|\log(1/r_{i,t})|=|\log r_{i,t}|$.
The \method{} mask is
\begin{equation}
M_i^{\mathrm{CARM}}
=
\ind\!\left[
s_{\mathrm{CARM}}(y_i)\le\tau_{\mathrm{CARM}}
\right].
\label{eq:carm_mask}
\end{equation}

During optimization, the response mask and token-level surrogate use the
same current-policy forward pass and the stored rollout log-probabilities,
before the optimizer update. The binary gate is treated as a constant during
backpropagation; a rejected response contributes no policy-loss gradient in
that update. Algorithm~\ref{alg:carm} and Appendix~\ref{app:implementation}
give the computation schedule and numerical conventions.

The two scores measure different properties: GeoMean measures the magnitude
of the average signed log-ratio, whereas \method{} measures the average
log-ratio magnitude. Mean-absolute aggregation combines contributions linearly,
preventing cancellation without selecting only the most extreme token or
squaring each deviation. 
Figure~\ref{fig:mechanism} contrasts this rule with GeoMean.

\begin{figure}[t]
    \centering
    \includegraphics[width=0.9\linewidth]{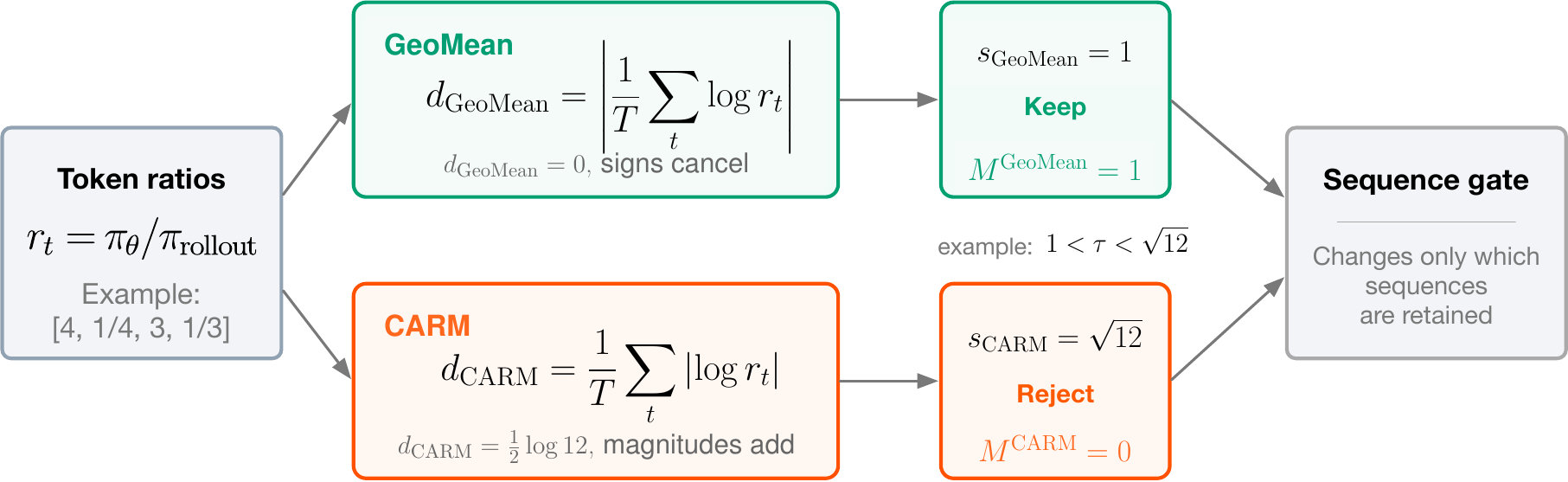}
    \caption{\method{} detects bidirectional token shifts that geometric-mean
    masking can hide. GeoMean averages signed log-ratios before taking an
    absolute value; \method{} averages their magnitudes. The response-level
    gate complements the token-level PPO/GRPO surrogate.}
    \label{fig:mechanism}
\end{figure}

\paragraph{A real disagreement case from an unfiltered rollout.}
We replay both masks on an unfiltered batch of 1,024 responses generated by
Qwen3.5-4B.
With $\tau_{\mathrm{GeoMean}}=1.005$ and $\tau_{\mathrm{CARM}}=1.02$,
160 responses are accepted by both rules, 668 are rejected by both,
195 are accepted only by GeoMean, and one is accepted only by \method{}.
Figure~\ref{fig:null_empirical_case} visualizes a representative GeoMean-only
response whose opposing log-ratio masses cancel;
Appendix~\ref{app:full_disagreement_case} provides all 8,619 tokens.
Its mean absolute log-ratio is $9.79$ times the absolute signed mean, while
the largest single-token contribution accounts for only $0.43\%$ of the
absolute log-ratio mass. The discrepancy is therefore not driven by one
extreme token. These unequal thresholds illustrate different filtering
decisions; the shared-threshold relation is examined separately below.

\begin{proposition}[Acceptance-set inclusion]
\label{prop:refinement}
For every response $y$, $s_{\mathrm{CARM}}(y)\ge s_{\mathrm{GeoMean}}(y)$.
Thus, for any shared threshold $\tau\ge1$,
\begin{equation}
\{y:s_{\mathrm{CARM}}(y)\le\tau\}
\subseteq
\{y:s_{\mathrm{GeoMean}}(y)\le\tau\}.
\label{eq:accepted_set}
\end{equation}
\end{proposition}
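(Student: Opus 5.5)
The plan is to compare the two log-space deviations directly and then pass to the scores through the monotonicity of the exponential. For a fixed response $y$ of length $T\ge1$ with finite log-ratios $\log r_t$, the triangle inequality for finite sums gives
\begin{equation*}
d_{\mathrm{GeoMean}}(y)
=\left|\frac{1}{T}\sum_{t=1}^{T}\log r_t\right|
\le\frac{1}{T}\sum_{t=1}^{T}\left|\log r_t\right|
=d_{\mathrm{CARM}}(y).
\end{equation*}
The right-hand side is finite because every $0<r_t<\infty$, as assumed in Section~\ref{sec:problem_setup}. Since $\exp$ is strictly increasing on $\mathbb{R}$, it follows that $s_{\mathrm{GeoMean}}(y)=\exp(d_{\mathrm{GeoMean}}(y))\le\exp(d_{\mathrm{CARM}}(y))=s_{\mathrm{CARM}}(y)$. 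This is the first claim.

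For the set inclusion, I would fix any threshold $\tau\ge1$ and take a response $y$ with $s_{\mathrm{CARM}}(y)\le\tau$. By the inequality just shown, $s_{\mathrm{GeoMean}}(y)\le s_{\mathrm{CARM}}(y)\le\tau$, so $y$ is also accepted by the GeoMean mask in Eq.~\ref{eq:vanilla_mask}. This gives Eq.~\ref{eq:accepted_set}. The restriction $\tau\ge1$ only ensures that the sets can be nonempty, since both scores are at least $1$; the argument does not otherwise use it.

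There is no real obstacle here. The only point that needs care is applying the triangle inequality after the normalization by $1/T$, which holds because $T>0$.

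I would also add a remark characterizing equality. Equality holds exactly when all nonzero log-ratios share the same sign. This shows that the inclusion is strict whenever cancellation occurs, as in the construction of Proposition~\ref{prop:cancellation}, where $s_{\mathrm{GeoMean}}=1$ but $s_{\mathrm{CARM}}=B>1$.
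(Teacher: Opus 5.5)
Your proof is correct and matches the paper's argument essentially step for step: the triangle inequality on the log-ratios, normalization by $T>0$, monotonicity of $\exp$, and then the chained inequality $s_{\mathrm{GeoMean}}(y)\le s_{\mathrm{CARM}}(y)\le\tau$ for the set inclusion, with the same equality condition (all nonzero log-ratios of one sign). One minor precision on your closing remark: mixed signs make the pointwise score inequality strict, but strict set inclusion at a given $\tau$ requires a response with $s_{\mathrm{GeoMean}}(y)\le\tau<s_{\mathrm{CARM}}(y)$, which the Proposition~\ref{prop:cancellation} construction supplies once you take $B>\tau$.
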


\begin{proof}
The triangle inequality gives
\[
\frac1T\sum_t|\log r_t|
\ge
\left|\frac1T\sum_t\log r_t\right|.
\]
Exponentiation preserves the inequality, and the accepted-set inclusion
follows immediately.
\end{proof}

Figure~\ref{fig:null_score_refinement} replays both statistics on the same
1,024 responses from an unfiltered rollout batch, before any
sequence-level gate can alter the training trajectory. Every response obeys
$d_{\mathrm{CARM}}\ge d_{\mathrm{GeoMean}}$, with no numerical violations;
the mean difference is $0.0319$. At every shared threshold, \method{} rejects
at least as many responses as GeoMean, in agreement with
Proposition~\ref{prop:refinement}.

\begin{figure}[t]
    \centering
    \includegraphics[width=\linewidth]{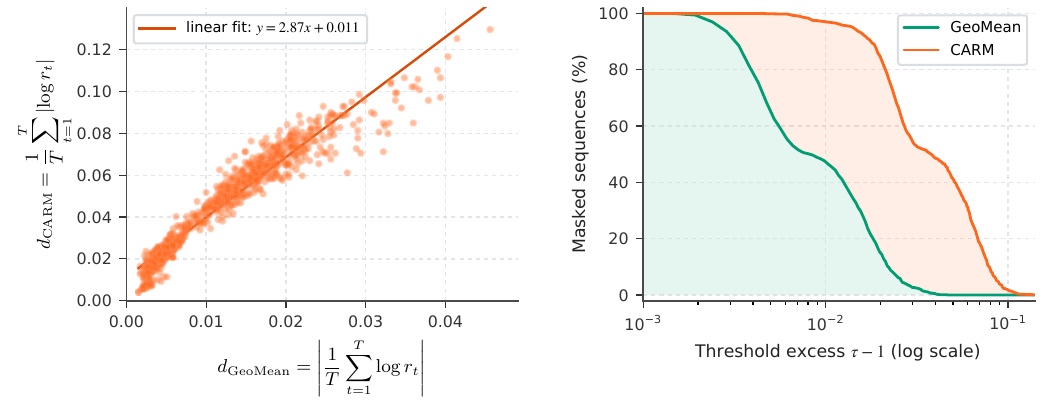}
    \caption{
    Paired deviations (left) and rejection rates at shared thresholds (right)
    on 1,024 unfiltered responses. Every response satisfies
    $d_{\mathrm{CARM}}\ge d_{\mathrm{GeoMean}}$.
    Green shading denotes rejection by both rules; orange shading denotes
    rejection only by \method{}.
    }
    \label{fig:null_score_refinement}
\end{figure}

\subsection{Threshold Interpretation}
\label{sec:threshold_theory}

Consider a token-ratio band $[\ell,u]$ with $0<\ell<1<u$.
Define the lower- and upper-side violation sets and their fractions as
\begin{align}
\mathcal V_-&=\{t:r_t<\ell\},
&p_-&=\frac{|\mathcal V_-|}{T},\\
\mathcal V_+&=\{t:r_t>u\},
&p_+&=\frac{|\mathcal V_+|}{T}.
\end{align}
Let $p_{\mathrm{out}}=p_-+p_+$ and define the log-costs of reaching the
boundaries by $b_-=\log(1/\ell)$ and $b_+=\log u$.
For $p_{\mathrm{out}}>0$, define
\begin{equation}
\bar\delta_{\mathrm{out}}(y)
=
\frac{
\displaystyle
\sum_{t\in\mathcal V_-}\log\frac{\ell}{r_t}
+
\sum_{t\in\mathcal V_+}\log\frac{r_t}{u}
}{|\mathcal V_-|+|\mathcal V_+|},
\label{eq:mean_outside_distance}
\end{equation}
and set $\bar\delta_{\mathrm{out}}(y)=0$ when $p_{\mathrm{out}}=0$.
We call $\bar\delta_{\mathrm{out}}$ the \emph{mean violation severity}: it is
the average log-distance by which an out-of-band token passes its nearest
boundary.
\begin{proposition}[Bounds on ratio-band violations]
\label{prop:asymmetric_budget}
If $s_{\mathrm{CARM}}(y)\le\tau$ with $\tau\ge1$, then
\begin{equation}
p_-b_-+p_+b_+ + p_{\mathrm{out}}\bar\delta_{\mathrm{out}}(y)
\le \log\tau.
\label{eq:frequency_distance_budget}
\end{equation}
Consequently, with $b_{\min}=\min(b_-,b_+)$,
\begin{equation}
p_{\mathrm{out}}
\le
\min\!\left\{1,\frac{\log\tau}{b_{\min}+\bar\delta_{\mathrm{out}}(y)}\right\}
\le\min\!\left\{1,\frac{\log\tau}{b_{\min}}\right\}.
\label{eq:asymmetric_total_frequency}
\end{equation}
\end{proposition}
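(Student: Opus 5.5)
The plan is to lower-bound $d_{\mathrm{CARM}}(y)=\frac1T\sum_t|\log r_t|$ token by token, discarding the in-band tokens. The hypothesis $s_{\mathrm{CARM}}(y)\le\tau$ is equivalent to $d_{\mathrm{CARM}}(y)\le\log\tau$ because $\exp$ is monotone. Every summand $|\log r_t|$ is nonnegative, so dropping the tokens with $\ell\le r_t\le u$ gives
\[
\log\tau\ \ge\ \frac1T\Big(\sum_{t\in\mathcal V_-}|\log r_t|+\sum_{t\in\mathcal V_+}|\log r_t|\Big).
\]

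Next I would decompose each out-of-band term exactly. For $t\in\mathcal V_-$ we have $r_t<\ell<1$, so
\[
|\log r_t|=\log(1/r_t)=\log(1/\ell)+\log(\ell/r_t)=b_-+\log(\ell/r_t).
\]
For $t\in\mathcal V_+$ we have $r_t>u>1$, so
\[
|\log r_t|=\log r_t=b_++\log(r_t/u).
\]
Summing and dividing by $T$ turns the boundary parts into $p_-b_-+p_+b_+$. The excess parts give
\[
\frac{|\mathcal V_-|+|\mathcal V_+|}{T}\,\bar\delta_{\mathrm{out}}(y)=p_{\mathrm{out}}\bar\delta_{\mathrm{out}}(y),
\]
by the definition in \eqref{eq:mean_outside_distance}. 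When $p_{\mathrm{out}}=0$ both sides of this identity vanish under the convention $\bar\delta_{\mathrm{out}}=0$, so that case needs no separate argument. This yields \eqref{eq:frequency_distance_budget}.

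For \eqref{eq:asymmetric_total_frequency}, I would bound
\[
p_-b_-+p_+b_+\ \ge\ (p_-+p_+)\,b_{\min}=p_{\mathrm{out}}b_{\min},
\]
so the budget becomes $p_{\mathrm{out}}\big(b_{\min}+\bar\delta_{\mathrm{out}}(y)\big)\le\log\tau$. Since $0<\ell<1<u$, we have $b_{\min}>0$, and $\bar\delta_{\mathrm{out}}\ge0$ because every excess term is positive. Dividing by the positive quantity $b_{\min}+\bar\delta_{\mathrm{out}}(y)$ gives the first ratio bound. Combining it with the trivial bound $p_{\mathrm{out}}\le1$ gives the first $\min$. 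The second inequality follows because the denominator only shrinks when $\bar\delta_{\mathrm{out}}$ is dropped, and $\log\tau\ge0$ since $\tau\ge1$.

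I expect no real obstacle; the whole argument is an exact decomposition followed by dropping nonnegative terms. The only points needing care are the sign bookkeeping, which uses $\ell<1<u$ to fix the sign of each log, and checking that the divisions are by strictly positive quantities.
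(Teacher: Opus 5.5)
Your proposal is correct and follows essentially the same route as the paper. Both proofs decompose each violating token exactly as $|\log r_t| = b_\pm + (\text{log-distance past the boundary})$, drop the nonnegative in-band mass, lower-bound both boundary costs by $b_{\min}$, and divide by the positive denominator $b_{\min}+\bar\delta_{\mathrm{out}}(y)$. You also state explicitly that $\log\tau\ge0$, which justifies the final comparison of the two minima; the paper uses this only implicitly.
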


Thus each response accepted by \method{} has a shared budget for the fraction
of violating token ratios and their mean severity. At a fixed threshold, more severe
violations imply a smaller admissible fraction; the bound does not require a
reciprocal-symmetric band.
This is a deterministic bound on the sampled-token ratios, not on the full
conditional-policy KL divergence.

\begin{proof}
The acceptance condition is
$\frac{1}{T}\sum_t|\log r_t|\le\log\tau$.
For a lower violation,
$|\log r_t|=b_-+\log(\ell/r_t)$; for an upper violation,
$|\log r_t|=b_++\log(r_t/u)$.
Sum these identities over violating tokens, drop the non-negative in-band
contribution, and divide by $T$ to obtain
Equation~\ref{eq:frequency_distance_budget}.
Replacing both boundary costs by $b_{\min}$ gives
$p_{\mathrm{out}}(b_{\min}+\bar\delta_{\mathrm{out}})\le\log\tau$.
Division by the positive denominator and $p_{\mathrm{out}}\le1$ give the
fraction bound. Appendix~\ref{app:detailed_proofs} provides the full derivation.
\end{proof}

\begin{corollary}[Reciprocal-symmetric band]
\label{cor:symmetric}
For $[1/c,c]$ with $c>1$, define
$\mathcal V_c=\{t:r_t<1/c\ \text{or}\ r_t>c\}$,
$p_c=|\mathcal V_c|/T$, and let $\bar\delta_c$ be the mean of
$|\log r_t|-\log c$ over $t\in\mathcal V_c$ (zero when $p_c=0$).
If $s_{\mathrm{CARM}}(y)\le\tau$, then
\begin{equation}
p_c\big(\log c+\bar\delta_c\big)\le\log\tau.
\label{eq:symmetric_budget}
\end{equation}
In particular, $p_c\le\log\tau/\log c$.
\end{corollary}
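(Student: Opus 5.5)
The plan is to obtain Corollary~\ref{cor:symmetric} as the special case $\ell=1/c$, $u=c$ of Proposition~\ref{prop:asymmetric_budget}. The work is checking that the symmetric quantities coincide with the asymmetric ones.

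With this choice, $b_-=\log(1/\ell)=\log c$ and $b_+=\log u=\log c$. Hence $b_{\min}=\log c>0$, since $c>1$. The violation sets are disjoint, because $1/c<1<c$, and $\mathcal V_c=\mathcal V_-\cup\mathcal V_+$. Therefore $p_c=p_-+p_+=p_{\mathrm{out}}$.

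Next I will match the severities. For $t\in\mathcal V_-$ we have $r_t<1/c<1$, so $|\log r_t|=-\log r_t$ and
\[
\log(\ell/r_t)=-\log c-\log r_t=|\log r_t|-\log c .
\]
For $t\in\mathcal V_+$ we have $r_t>c>1$, so
\[
\log(r_t/u)=\log r_t-\log c=|\log r_t|-\log c .
\]
Thus the numerator of Equation~\ref{eq:mean_outside_distance} equals $\sum_{t\in\mathcal V_c}(|\log r_t|-\log c)$. The denominator is $|\mathcal V_c|$, so $\bar\delta_{\mathrm{out}}=\bar\delta_c$. The zero-violation conventions agree as well.

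Substituting into Equation~\ref{eq:frequency_distance_budget} gives
\[
p_-\log c+p_+\log c+p_c\bar\delta_c=p_c(\log c+\bar\delta_c)\le\log\tau ,
\]
which is Equation~\ref{eq:symmetric_budget}.

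For the final claim, $\bar\delta_c\ge0$ because every $t\in\mathcal V_c$ has $|\log r_t|>\log c$. Dropping this term and dividing by $\log c>0$ gives $p_c\le\log\tau/\log c$. This also follows directly from the last inequality in Equation~\ref{eq:asymmetric_total_frequency}.

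There is no real obstacle here. The only step needing care is the sign bookkeeping in the lower-side case. There one must use $r_t<1$ to write $|\log r_t|=-\log r_t$, so that both sides reduce to the uniform expression $|\log r_t|-\log c$.
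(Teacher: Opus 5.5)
Your proposal is correct and follows essentially the same route as the paper: specialize Proposition~\ref{prop:asymmetric_budget} to $\ell=1/c$, $u=c$, verify $b_-=b_+=\log c$, $p_-+p_+=p_c$, and $\bar\delta_{\mathrm{out}}=\bar\delta_c$ via the same sign bookkeeping on each side, then drop the nonnegative severity term and divide by $\log c>0$. The proposition's hypothesis $\tau\ge1$ is automatically satisfied here, since $s_{\mathrm{CARM}}(y)\ge1$ for every response.
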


\begin{proof}
Set $\ell=1/c$ and $u=c$ in
Proposition~\ref{prop:asymmetric_budget}; then $b_-=b_+=\log c$ and
$\bar\delta_{\mathrm{out}}=\bar\delta_c$.
\end{proof}

The experimental PPO/GRPO clip parameters
$\epsilon_{\mathrm{low}}=0.2$ and
$\epsilon_{\mathrm{high}}=0.28$ correspond to the numerical band
$[0.8,1.28]$.
For this band, the lower and upper boundary costs are
$-\log0.8\approx0.2231$ and $\log1.28\approx0.2469$.
Consequently, Equation~\ref{eq:asymmetric_total_frequency} gives
$p_{\mathrm{out}}\le8.88\%$ at $\tau=1.02$ and
$p_{\mathrm{out}}\le17.58\%$ at $\tau=1.04$ for any accepted sequence.
The PPO clipping interval serves only as a reference band for interpreting
violations; \method{} does not impose it as a hard per-token constraint.

\section{Experiments}
\label{sec:experiments}

\subsection{Experimental Setup}
\label{sec:experimental_setup}

Experiments use \texttt{verl} \citep{sheng2025hybridflow} and GRPO. We train
Qwen3.5-4B and Qwen3.5-9B \citep{qwen3.5} on DAPO Math
\citep{yu2025dapo} and evaluate AIME 2024/2025/2026
\citep{maa2026aime} and BeyondAIME \citep{seed2025thinking} with 16 responses
per problem. For code generation, Qwen3.5-9B is trained on the TACO training
split \citep{li2023taco} with executable rewards. Evaluation uses the held-out
TACO test split, LiveCodeBench-v6
\citep{jain2024livecodebench}, and the EvalPlus suites HumanEval+ and MBPP+
\citep{liu2023evalplus}. Metrics are obtained using three random seeds
for both tasks. Math-Verify checks mathematical answers
\citep{kydlicek2025mathverify}; SandboxFusion executes and judges generated
code \citep{fullstackbench2025}. Baselines are Null, IcePop, TRM, and GeoMean.
Null retains the same GRPO procedure, including token-level clipping and
advantage normalization, but applies no additional off-policy response mask.
Appendix~\ref{app:experiment_details} and
Table~\ref{tab:full_configuration} give the datasets, prompts, and configuration.

\subsection{Mathematical Reasoning}
\label{sec:math_results}

Table~\ref{tab:main_results} compares mathematical reasoning across AIME
2024/2025/2026 and BeyondAIME. \method{} achieves the highest four-dataset
average for both Qwen3.5-4B and Qwen3.5-9B. Its three-year AIME average also
exceeds those of Null, IcePop, TRM, and GeoMean: relative to GeoMean, it rises
from $69.17\%$ to $70.42\%$ on Qwen3.5-4B and from $74.03\%$ to $77.50\%$
on Qwen3.5-9B. The latter improvement holds separately on all three AIME
years. For Qwen3.5-4B, \method{} improves AIME 2024, ties on AIME 2025, and
is slightly lower on AIME 2026, while improving the aggregate.

\begin{table}[H]
\centering
\normalsize
\setlength{\tabcolsep}{3pt}
\caption{Mathematical-reasoning mean@16 across methods (\%, higher is better).
\emph{Average} is the mean of AIME 2024, AIME 2025, AIME 2026, and
BeyondAIME. Bold and underline denote the best and second-best values within
each model block.}
\label{tab:main_results}
\begin{tabular}{@{}llccccc@{}}
\toprule
Model & Method & AIME 2024 & AIME 2025 & AIME 2026 & BeyondAIME & Average \\
\midrule
\multirow{5}{*}{Qwen3.5-4B}
& Null & 49.38 & 48.33 & 54.79 & 22.25 & 43.69 \\
& IcePop & 47.08 & 40.83 & 40.42 & 19.31 & 36.91 \\
& TRM & 66.67 & 55.21 & 65.00 & 32.31 & 54.80 \\
& GeoMean & \secondbest{72.50} & \bestresult{64.38} & \bestresult{70.62} & \secondbest{38.88} & \secondbest{61.59} \\
& \methodours & \bestresult{77.08} & \bestresult{64.38} & \secondbest{69.79} & \bestresult{42.56} & \bestresult{63.45} \\
\midrule
\multirow{5}{*}{Qwen3.5-9B}
& Null & 76.67 & 66.88 & 67.71 & 45.25 & 64.13 \\
& IcePop & 70.42 & 59.58 & 66.67 & 34.94 & 57.90 \\
& TRM & 78.33 & \secondbest{67.08} & \secondbest{76.46} & 45.81 & 66.92 \\
& GeoMean & \secondbest{80.42} & 66.67 & 75.00 & \secondbest{47.06} & \secondbest{67.29} \\
& \methodours & \bestresult{82.08} & \bestresult{71.04} & \bestresult{79.38} & \bestresult{49.19} & \bestresult{70.42} \\
\bottomrule
\end{tabular}
\end{table}

On BeyondAIME, \method{} raises mean@16 from $38.88\%$ to $42.56\%$ on
Qwen3.5-4B and from $47.06\%$ to $49.19\%$ on Qwen3.5-9B relative to GeoMean.
The gains therefore extend beyond the annual AIME benchmarks. Across all
four datasets, the Average increases from $61.59\%$ to $63.45\%$ on
Qwen3.5-4B and from $67.29\%$ to $70.42\%$ on Qwen3.5-9B. For the larger
model, the aggregate lead accompanies an improvement on every benchmark;
for the smaller model, it reflects an overall gain with a mixed year-by-year
AIME pattern.

Relative to TRM, \method{} improves the three-year AIME average by
$8.13$ points on Qwen3.5-4B and $3.54$ points on Qwen3.5-9B despite TRM's
higher rejection rates (Section~\ref{sec:sequence_utilization}).
Relative to IcePop, \method{} improves the four-dataset Average by
$26.54$ points on Qwen3.5-4B and $12.52$ points on Qwen3.5-9B.

\subsection{Code Generation}
\label{sec:code_results}

Table~\ref{tab:code_results} reports pass@1 for Qwen3.5-9B trained on the
TACO training split. Testing on TACO measures performance on held-out
problems from the same dataset, while LiveCodeBench-v6, HumanEval+, and MBPP+
evaluate the resulting policy on three additional benchmarks.

\begin{table}[H]
\centering
\normalsize
\caption{Evaluation pass@1 across code benchmarks (\%, higher is better).
TACO denotes the held-out test split, separate from the training split.
\emph{Average} is the mean of the four
datasets. Bold and underline denote the best and second-best values.}
\label{tab:code_results}
\setlength{\tabcolsep}{3.2pt}
\begin{tabular}{@{}lccccc@{}}
\toprule
Method & TACO & LiveCodeBench-v6 & HumanEval+ & MBPP+ & Average \\
\midrule
Null & 34.50 & 38.46 & 85.57 & 75.22 & 58.44 \\
IcePop & 45.87 & 43.36 & \secondbest{92.68} & \secondbest{79.81} & \secondbest{65.43} \\
TRM & 36.60 & \secondbest{44.97} & 87.80 & 76.72 & 61.52 \\
GeoMean & \secondbest{46.03} & 44.50 & 92.28 & 78.31 & 65.28 \\
\methodours & \bestresult{46.70} & \bestresult{51.20} & \bestresult{94.11} & \bestresult{81.22} & \bestresult{68.31} \\
\bottomrule
\end{tabular}
\end{table}

\method{} improves average pass@1 by $3.03$ points over GeoMean and $2.88$
points over IcePop, the strongest baseline on average. The gains are not
uniform: relative to GeoMean, the increase is modest on TACO, from
$46.03\%$ to $46.70\%$, and largest on LiveCodeBench-v6, from $44.50\%$ to
$51.20\%$. On HumanEval+ and MBPP+, \method{} reaches $94.11\%$ and
$81.22\%$, also exceeding IcePop, the strongest baseline on these suites.
Thus, the improvement is not confined to held-out TACO problems.

\section{Analysis}
\label{sec:analysis}
\raggedbottom

We examine whether the gains are explained by filtering more responses,
whether advantage sign should override the gate, and whether improvements
persist across different sampling criteria.

\subsection{Update Redundancy and Effective Filtering}
\label{sec:sequence_utilization}

\begin{wrapfigure}{r}{0.48\linewidth}
    \centering
    \includegraphics[width=\linewidth]{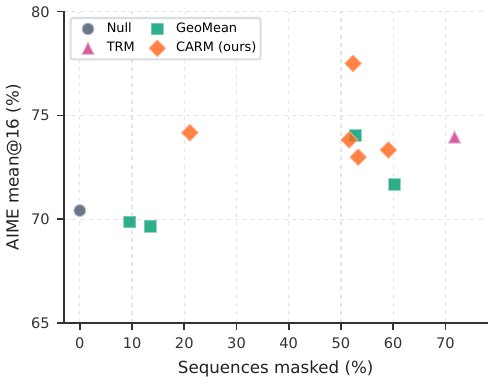}
    \caption{Sequence filtering versus AIME accuracy.
    Each point is a masking configuration. AIME mean@16 averages the
    2024/2025/2026 benchmarks.
    }
    \label{fig:accuracy_vs_mask_fraction}
\end{wrapfigure}

We use \emph{update redundancy} to describe improved validation accuracy
despite omitting response-level updates, without implying that rejected
responses lack useful information.

GeoMean and \method{} mask an average of $55.65\%$ and $54.52\%$ of
sequences during Qwen3.5-4B training, yet improve three-year AIME mean@16 over
Null by $18.33$ and $19.58$ points. For Qwen3.5-9B, the corresponding
fractions are $52.75\%$ and $52.32\%$, with gains of $3.61$ and $7.08$ points.
Filtering persists throughout training (Appendix Figure~\ref{fig:masked_fraction}).
Filtering already-generated responses complements selective rollout use
and data-efficient reasoning
\citep{yu2025dapo,sun2025prepo,lu2026rolloutbandits,wang2025oneshotrlvr,zheng2025greso}.

\label{sec:threshold_sensitivity}

Figure~\ref{fig:accuracy_vs_mask_fraction} compares AIME accuracy with mean
sequence-masked fraction.
On Qwen3.5-9B, \method{} outperforms GeoMean by $3.47$
points while their masked fractions differ by only $0.43$ points. Accuracy
is not monotone in the masked fraction, and TRM rejects more responses
without outperforming \method{}.

Paired-score differences (Figure~\ref{fig:null_score_refinement}) show that
rejection rates alone do not characterize retained trajectories.
Tightening \method{}'s frequency--severity budget
(Proposition~\ref{prop:asymmetric_budget}) limits sampled-token mismatch but
removes update contributions, consistent with the observed non-monotonicity.

\Needspace{7\baselineskip}
\subsection{Advantage-Conditional Masking Ablation}
\label{sec:negative_advantage_ablation}

DeepSeek-V3.2 masks only negative-advantage responses \citep{liu2025deepseek}.
We test this conditioning on Qwen3.5-9B with \method{}'s score and threshold
fixed within each task. Standard masking reaches $75.63\%$ AIME mean@16
(2024/2025/2026) versus $60.28\%$ for negative-only masking; LiveCodeBench-v6
scores are $50.58\%$ versus $42.22\%$ (Figure~\ref{fig:negative_only_ablation}).

Even a correctly rewarded response can be stale if the current policy assigns
substantially different probabilities to its tokens. PPO clips positive-advantage
terms above the upper ratio bound, but other token updates can remain active
\citep{schulman2017ppo}; it does not enforce a response-level trust region
\citep{li2026trust}. These results suggest that response-level mismatch checks
can remain useful even for positive-advantage responses.

\Needspace{8\baselineskip}
\subsection{Sampling Consistency}
\label{sec:sampling_consistency}

We assess sampling consistency using bootstrapped best-of-$k$ and worst-of-$k$:
the former requires at least one correct response among $k$, while the latter
requires all $k$ to be correct. Worst-of-$k$ measures sampling repeatability,
not distribution-shift or adversarial robustness. Appendix~\ref{app:sampling_scaling}
defines the estimators and reports results for $k\in\{1,2,4,8,16\}$.

For Qwen3.5-9B, Figure~\ref{fig:sampling_scaling_9b} averages results over
AIME 2024/2025/2026. \method{} improves mean@16, best@16, and worst@16 over
GeoMean by $3.47$, $2.74$, and $2.78$ points, respectively; the corresponding
gains over TRM are $3.54$, $0.30$, and $4.61$ points. \method{} also achieves
the highest best-of-$k$ and worst-of-$k$ values throughout the displayed range.

\begin{figure}[H]
    \centering
    \begin{minipage}[t]{0.36\linewidth}
    \vspace{0pt}
    \centering
    \includegraphics[width=\linewidth]{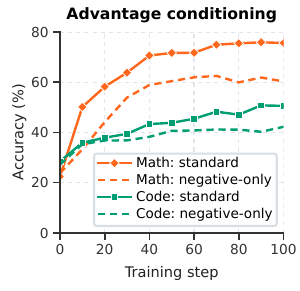}
    \caption{Standard (solid) versus negative-only (dashed) \method{} masking.}
    \label{fig:negative_only_ablation}
    \end{minipage}\hfill
    \begin{minipage}[t]{0.62\linewidth}
    \vspace{0pt}
    \centering
    \includegraphics[width=\linewidth]{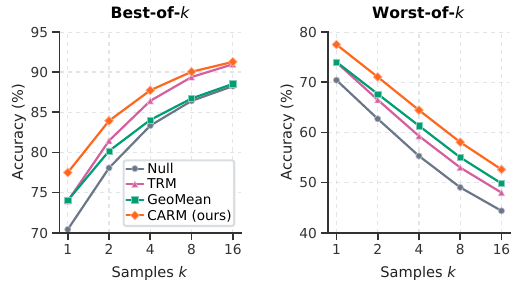}
    \caption{Bootstrapped best-of-$k$ (left) and worst-of-$k$
    (right) accuracy on AIME.}
    \label{fig:sampling_scaling_9b}
    \end{minipage}
\end{figure}

\section{Conclusion}
\label{sec:conclusion}

We proposed \method, a response-level off-policy mask that 
averages absolute token log-ratios to
prevent opposing probability changes from canceling. Our analysis shows that signed aggregation
can accept responses even when every sampled-token ratio lies outside a reciprocal-symmetric band.
In contrast, \method bounds the frequency and severity of ratio-band violations through a joint budget
for each accepted response, giving the threshold a token-level interpretation. Experiments show
gains in mathematical reasoning and code generation over the evaluated baselines, together with improved sampling consistency. 
These findings support \method{} as a principled response-selection
criterion that complements reward-directed PPO/GRPO optimization.

\clearpage
\bibliography{references}
\bibliographystyle{iclr2027_conference}

\clearpage
\appendix
\raggedbottom
\section{Detailed Proofs}
\label{app:detailed_proofs}

\subsection{Proof of Proposition~\ref{prop:cancellation}}

\paragraph{Step 1: construct an even-length response.}
Fix arbitrary constants $B>1$ and $1<c<B$.
Choose an arbitrary positive integer $m$ and let the response length be
$T=2m$. For every pair $j\in\{1,\ldots,m\}$, define
\begin{equation}
r_{2j-1}=B,
\qquad
r_{2j}=\frac1B.
\label{eq:appendix_cancellation_construction}
\end{equation}
Thus the response contains exactly $m$ ratios equal to $B$ and exactly $m$
ratios equal to $1/B$.
These ratios can arise from valid conditional policies: at each odd
position, assign the sampled token probabilities $1/(1+B)$ under the rollout
policy and $B/(1+B)$ under the current policy, and swap them at each even
position. Assign the remaining probability to a second token. Both policies
are normalized, all probabilities are positive, and their sampled-token
ratios are exactly those in Equation~\ref{eq:appendix_cancellation_construction}.

\paragraph{Step 2: verify that every ratio violates the band.}
For each odd position, $r_{2j-1}=B>c$, so it lies strictly above the upper
endpoint of $[1/c,c]$. For each even position, $1<c<B$ and all quantities are
positive. Taking reciprocals reverses the inequalities and gives
\begin{equation}
1>\frac1c>\frac1B.
\end{equation}
Hence $r_{2j}=1/B<1/c$, so it lies strictly below the lower endpoint.
Therefore every one of the $T$ token ratios is outside $[1/c,c]$.

\paragraph{Step 3: evaluate the GeoMean statistic.}
The logarithm of a reciprocal satisfies
$\log(1/B)=-\log B$. Grouping the signed log-ratios into the $m$ adjacent
pairs in Equation~\ref{eq:appendix_cancellation_construction} yields
\begin{align}
\sum_{t=1}^{T}\log r_t
&=\sum_{j=1}^{m}\left(\log B+\log(1/B)\right)\\
&=\sum_{j=1}^{m}\left(\log B-\log B\right)\\
&=\sum_{j=1}^{m}0\\
&=0.
\end{align}
Substitution into Equation~\ref{eq:vanilla_score} gives
\begin{align}
d_{\mathrm{GeoMean}}(y)
&=\left|\frac1T\sum_{t=1}^{T}\log r_t\right|\\
&=\left|\frac{0}{T}\right|\\
&=0,
\end{align}
and therefore
\begin{equation}
s_{\mathrm{GeoMean}}(y)=\exp(d_{\mathrm{GeoMean}}(y))=\exp(0)=1.
\end{equation}
This proves the proposition: every token can violate the prescribed reciprocal
band even though GeoMean reports its minimum possible score.

\paragraph{Step 4: evaluate the CARM statistic for comparison.}
Because $B>1$, $\log B>0$, and hence
$|\log B|=|\log(1/B)|=\log B$. Consequently,
\begin{align}
d_{\mathrm{CARM}}(y)
&=\frac1T\sum_{t=1}^{T}|\log r_t|\\
&=\frac1{2m}\sum_{j=1}^{m}
\left(|\log B|+|\log(1/B)|\right)\\
&=\frac1{2m}\sum_{j=1}^{m}2\log B\\
&=\log B,
\end{align}
so $s_{\mathrm{CARM}}(y)=\exp(\log B)=B$.
Thus any GeoMean threshold at least one accepts this construction, whereas
\method{} rejects it whenever $\tau_{\mathrm{CARM}}<B$.

\subsection{Proof of Proposition~\ref{prop:refinement}}

\paragraph{Step 1: move to log-ratio coordinates.}
For a fixed response of positive length $T$, set
$z_t=\log r_t\in\mathbb R$. For $T=1$, both scores are identical.
For $T\ge2$, the triangle inequality applied repeatedly to the finite sum gives
\begin{align}
\left|\sum_{t=1}^{T}z_t\right|
&=|z_1+(z_2+\cdots+z_T)|\\
&\le |z_1|+|z_2+\cdots+z_T|\\
&\le\cdots\le\sum_{t=1}^{T}|z_t|.
\end{align}

\paragraph{Step 2: normalize by response length.}
Since $T>0$, division by $T$ preserves the direction of the inequality:
\begin{equation}
\left|\frac1T\sum_t z_t\right|
\le \frac1T\sum_t|z_t|.
\end{equation}
Using $z_t=\log r_t$ and Equations~\ref{eq:vanilla_score} and
\ref{eq:carm_score}, the left-hand side is exactly
$d_{\mathrm{GeoMean}}(y)$ and the right-hand side is exactly
$d_{\mathrm{CARM}}(y)$. Therefore
\begin{equation}
d_{\mathrm{GeoMean}}(y)\le d_{\mathrm{CARM}}(y).
\end{equation}

\paragraph{Step 3: return to ratio space.}
The exponential function is strictly increasing on $\mathbb R$. Applying it to
both sides gives
\begin{align}
\exp(d_{\mathrm{GeoMean}}(y))
&\le \exp(d_{\mathrm{CARM}}(y)),\\
s_{\mathrm{GeoMean}}(y)
&\le s_{\mathrm{CARM}}(y).
\end{align}
This proves the pointwise score inequality.

\paragraph{Step 4: prove accepted-set inclusion.}
Take an arbitrary response
$y\in\{y:s_{\mathrm{CARM}}(y)\le\tau\}$. By membership in this set,
$s_{\mathrm{CARM}}(y)\le\tau$. Combining this fact with the pointwise result
from Step 3 gives
\begin{equation}
s_{\mathrm{GeoMean}}(y)
\le s_{\mathrm{CARM}}(y)
\le\tau.
\end{equation}
Thus $y\in\{y:s_{\mathrm{GeoMean}}(y)\le\tau\}$. Since the argument holds for
every element of the CARM accepted set, Equation~\ref{eq:accepted_set}
follows.

\paragraph{Equality condition.}
For real scalars, equality in the triangle inequality holds exactly when all
nonzero $z_t$ have the same sign. Accordingly,
$s_{\mathrm{GeoMean}}=s_{\mathrm{CARM}}$ when all nonzero token log-ratios are
nonnegative or all are nonpositive. If at least one $z_t>0$ and at least one
$z_{t'}<0$, the inequality is strict.

\subsection{Proof of Proposition~\ref{prop:asymmetric_budget}}

\paragraph{Step 1: convert acceptance into an additive budget.}
By definition,
\begin{equation}
s_{\mathrm{CARM}}(y)
=\exp\left(\frac1T\sum_{t=1}^{T}|\log r_t|\right).
\end{equation}
The acceptance assumption $s_{\mathrm{CARM}}(y)\le\tau$ and monotonicity of
$\log(\cdot)$ imply
\begin{align}
\log s_{\mathrm{CARM}}(y)
&\le\log\tau,\\
\frac1T\sum_{t=1}^{T}|\log r_t|
&\le\log\tau.
\end{align}
Multiplying by the positive length $T$ gives the total absolute-log budget
\begin{equation}
\sum_{t=1}^{T}|\log r_t|\le T\log\tau.
\label{eq:appendix_total_budget}
\end{equation}

\paragraph{Step 2: partition positions and define the boundary costs.}
Let $\mathcal V_0=\{1,\ldots,T\}\setminus(\mathcal V_-\cup\mathcal V_+)$.
Because $\ell<1<u$, the sets $\mathcal V_-$, $\mathcal V_0$, and
$\mathcal V_+$ are disjoint and cover all token positions. Define
\begin{equation}
b_-=\log\frac1\ell=-\log\ell,
\qquad
b_+=\log u.
\end{equation}
The assumptions $0<\ell<1<u$ imply $b_->0$ and $b_+>0$.

\paragraph{Step 3: decompose a lower-side violation exactly.}
Take $t\in\mathcal V_-$. Then $0<r_t<\ell<1$, so
$|\log r_t|=-\log r_t$. Inserting and subtracting $\log\ell$ gives
\begin{align}
|\log r_t|
&=-\log r_t\\
&=-\log\ell+\log\ell-\log r_t\\
&=b_-+\log\frac{\ell}{r_t}.
\label{eq:appendix_lower_exact}
\end{align}
Since $\ell/r_t>1$, the final term is positive. It approaches zero at the
lower boundary and grows with the token's log-distance below that boundary.

\paragraph{Step 4: decompose an upper-side violation exactly.}
Take $t\in\mathcal V_+$. Then $r_t>u>1$, so
$|\log r_t|=\log r_t$. Inserting and subtracting $\log u$ gives
\begin{align}
|\log r_t|
&=\log r_t\\
&=\log u+\log r_t-\log u\\
&=b_++\log\frac{r_t}{u}.
\label{eq:appendix_upper_exact}
\end{align}
Because $r_t/u>1$, this final term is likewise a positive log-distance
past the upper boundary.

\paragraph{Step 5: sum the exact decompositions.}
Partition the total absolute-log drift and substitute
Equations~\ref{eq:appendix_lower_exact} and
\ref{eq:appendix_upper_exact} term by term:
\begin{align}
\sum_{t=1}^{T}|\log r_t|
=&\ |\mathcal V_-|b_-+|\mathcal V_+|b_+\\
&+\sum_{t\in\mathcal V_-}\log\frac{\ell}{r_t}
+\sum_{t\in\mathcal V_+}\log\frac{r_t}{u}\\
&+\sum_{t\in\mathcal V_0}|\log r_t|.
\label{eq:appendix_partition_exact}
\end{align}

\paragraph{Step 6: identify the conditional mean log-distance.}
If $p_{\mathrm{out}}>0$, Equation~\ref{eq:mean_outside_distance} implies
\begin{align}
&\sum_{t\in\mathcal V_-}\log\frac{\ell}{r_t}
+\sum_{t\in\mathcal V_+}\log\frac{r_t}{u}\\
&\qquad=
\big(|\mathcal V_-|+|\mathcal V_+|\big)
\bar\delta_{\mathrm{out}}(y).
\end{align}
Divide by $T$ and use
$(|\mathcal V_-|+|\mathcal V_+|)/T=p_{\mathrm{out}}$:
\begin{equation}
\frac1T\left(
\sum_{t\in\mathcal V_-}\log\frac{\ell}{r_t}
+\sum_{t\in\mathcal V_+}\log\frac{r_t}{u}
\right)
=p_{\mathrm{out}}\bar\delta_{\mathrm{out}}(y).
\label{eq:appendix_distance_identity}
\end{equation}
If $p_{\mathrm{out}}=0$, both sums are empty; our zero convention makes the
same identity valid.

\paragraph{Step 7: obtain the master budget.}
The in-band sum in Equation~\ref{eq:appendix_partition_exact} is nonnegative.
Drop it, apply Equation~\ref{eq:appendix_total_budget}, divide by $T$, and use
Equation~\ref{eq:appendix_distance_identity}:
\begin{equation}
p_-b_-+p_+b_+
+p_{\mathrm{out}}\bar\delta_{\mathrm{out}}(y)
\le\log\tau.
\end{equation}
This is Equation~\ref{eq:frequency_distance_budget}.

\paragraph{Step 8: bound the total violation fraction.}
Let $b_{\min}=\min(b_-,b_+)>0$. Keeping the severity contribution in the
master budget gives
\begin{align}
p_{\mathrm{out}}(b_{\min}+\bar\delta_{\mathrm{out}})
&=(p_-+p_+)b_{\min}+p_{\mathrm{out}}\bar\delta_{\mathrm{out}}\\
&\le p_-b_-+p_+b_++p_{\mathrm{out}}\bar\delta_{\mathrm{out}}\\
&\le\log\tau.
\end{align}
Because $b_{\min}+\bar\delta_{\mathrm{out}}>0$, division yields
\begin{equation}
p_{\mathrm{out}}\le\frac{\log\tau}{b_{\min}+\bar\delta_{\mathrm{out}}}
\le\frac{\log\tau}{b_{\min}}.
\end{equation}
Combining each inequality with $p_{\mathrm{out}}\le1$ proves
Equation~\ref{eq:asymmetric_total_frequency}, including the case
$p_{\mathrm{out}}=0$ under the stated zero convention.

\paragraph{Step 9: recover the frequency-only budget.}
Every term defining $\bar\delta_{\mathrm{out}}$ is nonnegative. Dropping the
distance term from the master budget therefore gives
$p_-b_-+p_+b_+\le\log\tau$.
Let $b_{\min}=\min(b_-,b_+)>0$. Then
\begin{align}
p_{\mathrm{out}}b_{\min}
&=(p_-+p_+)b_{\min}\\
&\le p_-b_-+p_+b_+\\
&\le\log\tau.
\end{align}
Division by $b_{\min}$ gives
$p_{\mathrm{out}}\le\log\tau/b_{\min}$. Independently,
$p_{\mathrm{out}}\le1$ because it is a fraction. Taking the smaller upper
bound recovers the severity-independent consequence of
Equation~\ref{eq:asymmetric_total_frequency}.

\subsection{Proof of Corollary~\ref{cor:symmetric}}

\paragraph{Step 1: specialize the two endpoints.}
Set $\ell=1/c$ and $u=c$. Because $c>1$, the assumptions
$0<\ell<1<u$ hold, and the two boundary costs are identical:
\begin{align}
b_-&=\log(1/\ell)=\log c,\\
b_+&=\log u=\log c.
\end{align}

\paragraph{Step 2: combine the violation sets.}
The lower and upper violation sets are
$\mathcal V_-=\{t:r_t<1/c\}$ and
$\mathcal V_+=\{t:r_t>c\}$. They are disjoint because $c>1$, and
their union is exactly $\mathcal V_c$. Hence
\begin{equation}
p_-+p_+=\frac{|\mathcal V_c|}{T}=p_c.
\end{equation}

\paragraph{Step 3: identify the reciprocal-band log-distance.}
For $t\in\mathcal V_-$,
\begin{align}
\log\frac{\ell}{r_t}
&=\log\frac{1/c}{r_t}\\
&=|\log r_t|-\log c,
\end{align}
where the second equality uses $r_t<1$. For $t\in\mathcal V_+$,
\begin{align}
\log\frac{r_t}{u}
&=\log r_t-\log c\\
&=|\log r_t|-\log c,
\end{align}
where $r_t>1$. Thus the conditional mean in
Equation~\ref{eq:mean_outside_distance} is precisely $\bar\delta_c$.

\paragraph{Step 4: simplify the master budget.}
Substitute $p_-+p_+=p_c$, $b_-=b_+=\log c$, and
$\bar\delta_{\mathrm{out}}=\bar\delta_c$ into
Equation~\ref{eq:frequency_distance_budget}:
\begin{align}
p_-\log c+p_+\log c+p_c\bar\delta_c
&\le\log\tau,\\
p_c\log c+p_c\bar\delta_c
&\le\log\tau,\\
p_c(\log c+\bar\delta_c)
&\le\log\tau.
\end{align}
This is Equation~\ref{eq:symmetric_budget}. Finally,
$\bar\delta_c\ge0$, so dropping $p_c\bar\delta_c$ gives
$p_c\log c\le\log\tau$. Since $\log c>0$, division by $\log c$ yields
$p_c\le\log\tau/\log c$.

\FloatBarrier
\section{Implementation of CARM}
\label{app:implementation}

Algorithm~\ref{alg:carm} gives the response-level masking rule, treating
the hard threshold decision as constant during backpropagation. The gate
changes which response losses contribute to an update while retaining the
token-level PPO/GRPO surrogate. 

\begin{algorithm}[H]
\caption{\method{} response-level off-policy masking}
\label{alg:carm}
\begin{algorithmic}[1]
\Require Finite current and rollout token log-probabilities
$\log\pi_\theta(y_{i,t}\mid x_i,y_{i,<t})$ and
$\log\pi_{\mathrm{rollout}}(y_{i,t}\mid x_i,y_{i,<t})$; response-token mask
$a_{i,t}\in\{0,1\}$ with $\sum_t a_{i,t}\ge1$ for each response;
threshold $\tau_{\mathrm{CARM}}\ge1$
\Ensure Response-acceptance indicators $M_i^{\mathrm{CARM}}$
\For{each response $y_i$}
    \State $T_i \gets \sum_t a_{i,t}$
    \State $\Delta_{i,t} \gets
    \operatorname{stopgrad}\!\left(\log\pi_\theta(y_{i,t}\mid\cdot)
    -\log\pi_{\mathrm{rollout}}(y_{i,t}\mid\cdot)\right)$
    \State $d_{\mathrm{CARM},i} \gets
    \frac{1}{T_i}\sum_t a_{i,t}|\Delta_{i,t}|$
    \State $s_{\mathrm{CARM},i} \gets \exp(d_{\mathrm{CARM},i})$
    \State $M_i^{\mathrm{CARM}} \gets
    \ind[s_{\mathrm{CARM},i}\le\tau_{\mathrm{CARM}}]$
\EndFor
\State Gate each per-response PPO/GRPO loss by $M_i^{\mathrm{CARM}}$ and
apply the training implementation's loss normalization.
\end{algorithmic}
\end{algorithm}

\paragraph{Mask computation schedule.}
We store the sampled-token log-probabilities produced during rollout.
During training, each optimizer mini-batch is divided into micro-batches;
each micro-batch forward supplies the current log-probabilities used by both
the response mask and the token surrogate. The optimizer updates the policy
after gradient accumulation over the mini-batch. Threshold comparisons
produce a binary gate with no gradient through the decision, as represented
by $\operatorname{stopgrad}$ in Algorithm~\ref{alg:carm}.
Masking is local to loss evaluation rather than permanent removal from the rollout batch. 
Our experiments use one optimization epoch per rollout batch;
with multiple epochs, the gate would be recomputed whenever a response is evaluated under the updated policy.

The PPO/GRPO loss implementation in \texttt{verl} also applies dual clipping
to negative advantages. Writing $\ell_{i,t}$ for
Equation~\ref{eq:token_surrogate}, its surrogate is
\begin{equation}
\ell^{\mathrm{dual}}_{i,t}
=
\begin{cases}
\max\{\ell_{i,t},c_{\mathrm{dual}}\hat A_i\},&\hat A_i<0,\\
\ell_{i,t},&\hat A_i\ge0,
\end{cases}
\qquad c_{\mathrm{dual}}=3.
\end{equation}
The sequence mask multiplies this surrogate without changing its clipping
rule or normalization.

\paragraph{Numerical convention.}
The theoretical scores use exact log-ratios. The implementation uses
\texttt{verl}'s numerical guard
$\widetilde z_t=\operatorname{clip}(\log r_t,-20,20)$ when computing the
mask and token surrogate. If the guard is inactive, the implemented and
theoretical scores coincide. Otherwise, the score inequalities and
frequency--severity budget apply to the effective ratios
$\widetilde r_t=\exp(\widetilde z_t)$.
For a band strictly inside $[e^{-20},e^{20}]$, including $[0.8,1.28]$,
clipping does not change which tokens violate the band. Thus the
frequency-only guarantee also holds for the original ratios, while the
severity term measures the guarded log-distances.

\FloatBarrier
\clearpage
\section{Complete Response for the Disagreement Case}
\label{app:full_disagreement_case}

Figure~\ref{fig:null_empirical_case} in the main paper visualizes the complete signed-drift
trajectory and a tokenizer-aligned local window. For auditability, the five
pages below show all 8,619 response tokens from the same rollout without
elision. Each token is rendered in its tokenizer-aligned box: red denotes
$\log r_t>0$, green denotes $\log r_t<0$, and color intensity encodes
$|\log r_t|$. The first-page header repeats the response-level scores and masking
decisions.
For this response, $d_{\mathrm{GeoMean}}=0.002502$ and
$d_{\mathrm{CARM}}=0.024486$: signed aggregation cancels $89.78\%$ of the
absolute log-ratio mass. The example demonstrates score cancellation and differing mask
decisions.

\begin{center}
\includegraphics[page=1,width=0.96\linewidth]{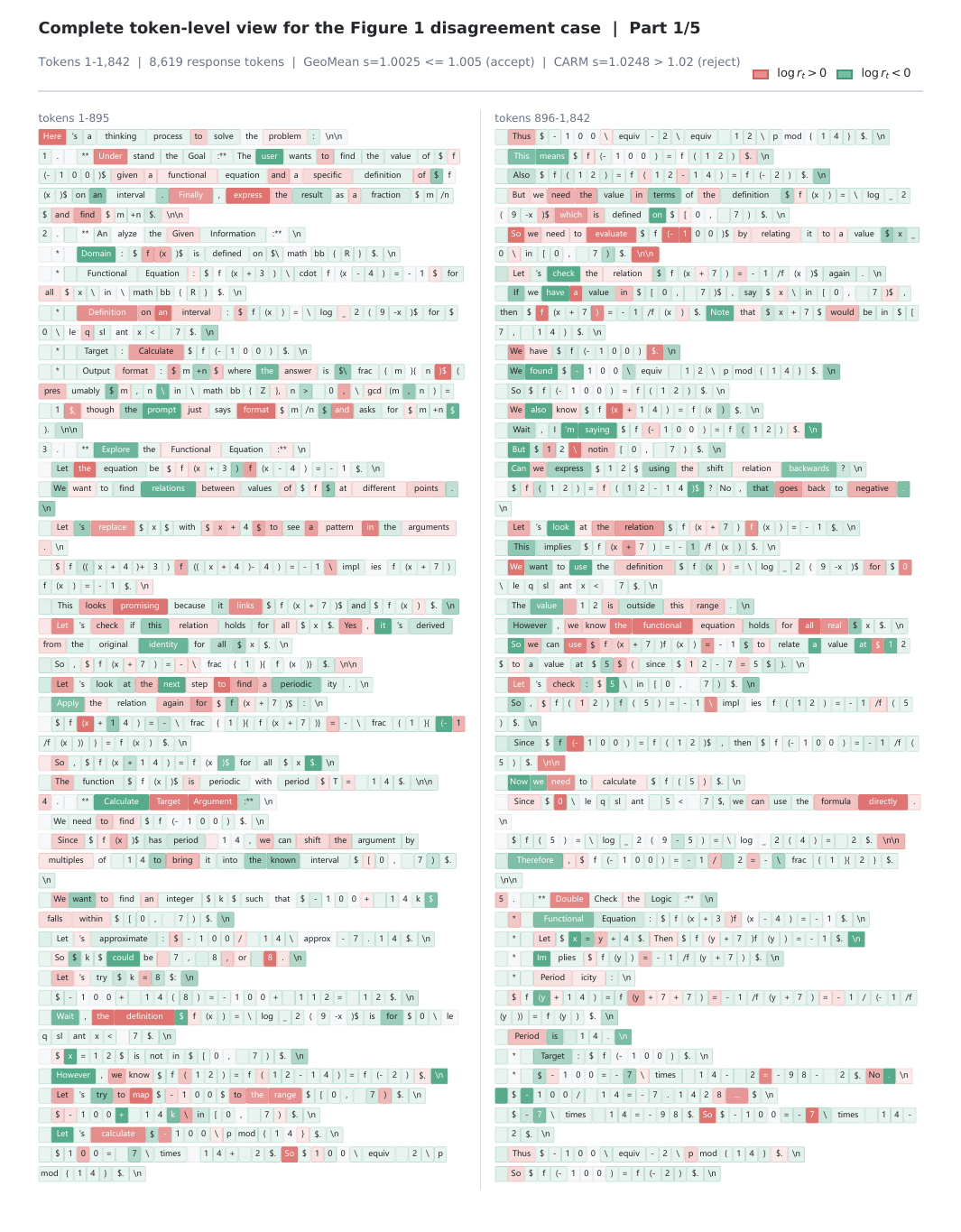}
\end{center}
\clearpage
\begin{center}
\includegraphics[page=2,width=0.96\linewidth]{figures/empirical_cancellation_full_sequence.pdf}
\end{center}
\clearpage
\begin{center}
\includegraphics[page=3,width=0.96\linewidth]{figures/empirical_cancellation_full_sequence.pdf}
\end{center}
\clearpage
\begin{center}
\includegraphics[page=4,width=0.96\linewidth]{figures/empirical_cancellation_full_sequence.pdf}
\end{center}
\clearpage
\begin{center}
\includegraphics[page=5,width=0.96\linewidth]{figures/empirical_cancellation_full_sequence.pdf}
\end{center}

\FloatBarrier
\section{Experimental Settings}
\label{app:experiment_details}

\subsection{Data}

\paragraph{Math training data.}
The mathematical-reasoning experiments use the competition-style problems in
the DAPO Math training corpus \citep{yu2025dapo}.
Math-Verify~\citep{kydlicek2025mathverify}
extracts the boxed final answer and checks its equivalence to the reference.
Null, IcePop, TRM, GeoMean,
and \method{} use the same prompt order, rollout group size, rewards, optimizer
settings, and token-level PPO/GRPO clipping; only the off-policy correction
differs.

\paragraph{Math evaluation data.}
AIME 2024, AIME 2025, and AIME 2026 each contain 30 contest problems and are
reported separately \citep{maa2026aime}. BeyondAIME contributes 100 additional
problems designed to extend evaluation beyond the three annual AIME sets
\citep{seed2025thinking}. We sample 16 responses
per problem and compute \texttt{acc/mean@16} from exact final-answer
correctness. The Average in Table~\ref{tab:main_results} gives AIME 2024,
AIME 2025, AIME 2026, and BeyondAIME equal weight. Evaluation is performed
before training and periodically during optimization.

\paragraph{Code training data.}
The general-code experiments use problems from the TACO training split
\citep{li2023taco}, with natural-language task
descriptions, input/output specifications, and executable tests. During RL,
the reward checks up to the first ten available tests for each training
problem. This fixed cap keeps execution cost and sandbox load comparable
across methods; held-out benchmark tests are not used by the training reward.
All five methods receive the same prompts, decoding budget, executable reward,
and optimizer configuration.

\paragraph{Code evaluation data.}
We evaluate each frozen TACO-trained checkpoint on four complementary suites.
TACO evaluation uses 1,000 problems from its held-out test split and up to the first ten
tests per problem. LiveCodeBench-v6 contains 1,054 temporally collected coding
problems and is judged with its complete released tests. HumanEval+ and MBPP+
contain 164 and 378 problems, respectively, and use the stronger EvalPlus test
suites rather than only the original base tests. MBPP+ extends the MBPP
benchmark \citep{austin2021program}. Every generated program is
rejudged with SandboxFusion~\citep{fullstackbench2025}, with functional
correctness reported as pass@1.

\subsection{Prompt Templates}

Figures~\ref{fig:math_prompt_template} and~\ref{fig:code_prompt_template}
reproduce the task-specific chat messages used throughout training and
evaluation. Dataset problem statements replace the placeholders without
changing the surrounding instructions.

\begin{figure}[H]
\centering
\setlength{\fboxsep}{8pt}
\fbox{%
\begin{minipage}{0.93\linewidth}
\normalsize
\raggedright
\textbf{System:} A conversation between User and Assistant. The user asks a
question, and the Assistant solves it.\par
\textbf{User:} Please solve the problem step by step, and put your final
answer within \texttt{\textbackslash boxed\{\}}.\\
\texttt{<problem>}
\end{minipage}%
}
\caption{Prompt template for mathematical-reasoning experiments. The
placeholder \texttt{<problem>} is replaced by the problem text.}
\label{fig:math_prompt_template}
\end{figure}

\begin{figure}[H]
\centering
\setlength{\fboxsep}{8pt}
\fbox{%
\begin{minipage}{0.93\linewidth}
\normalsize
\raggedright
\textbf{System:} You are an expert competitive programming assistant.\par
Solve the user's programming problem in Python 3. You may reason about the
solution, algorithm, and edge cases. At the end, provide the final answer as
exactly one fenced Python code block.\par
The final code must be complete and executable. It should read from standard
input and write to standard output, unless the problem provides starter code
requiring a specific function or class interface.\par
\textbf{User:} Solve the following programming problem.\par
Problem:\\
\texttt{<problem>}\\
\texttt{<starter code, when provided>}\par
Return your final solution in this format:\\
\texttt{\textasciigrave{}\textasciigrave{}\textasciigrave{}python}\\
\texttt{\# your complete Python solution}\\
\texttt{\textasciigrave{}\textasciigrave{}\textasciigrave{}}
\end{minipage}%
}
\caption{Prompt template for code-generation experiments. The placeholder
\texttt{<problem>} is replaced by the problem text; the starter-code field is
included only when provided by the dataset.}
\label{fig:code_prompt_template}
\end{figure}

\subsection{Training and Evaluation Configuration}

Table~\ref{tab:full_configuration} consolidates the shared settings,
method-specific controls, and threshold grids used to produce the reported
comparisons.

\begin{table}[H]
\centering
\normalsize
\caption{Complete experimental configuration for mathematical reasoning and
code generation.}
\label{tab:full_configuration}
\begingroup
\renewcommand{\arraystretch}{0.96}
\begin{tabular}{@{}>{\raggedright\arraybackslash}p{0.29\linewidth}
                    >{\raggedright\arraybackslash}p{0.66\linewidth}@{}}
\toprule
Field & Value \\
\midrule
Base models & Qwen3.5-4B and Qwen3.5-9B (math); Qwen3.5-9B (code) \\
Framework / algorithm & \texttt{verl}; GRPO-style optimization \\
Training data & DAPO Math (math); TACO training split with executable rewards (code) \\
Evaluation data & AIME 2024/2025/2026 and BeyondAIME; TACO test split, LiveCodeBench-v6, HumanEval+, and MBPP+ \\
Evaluation sampling & 16 responses/problem (math); one response/problem per evaluation (code) \\
Prompt batch / rollout group & 128 prompts / 8 responses per prompt \\
PPO mini-batch size & 8 \\
Training budget & Up to 120 training iterations \\
Maximum response length & 16,384 tokens \\
Learning rate & $3\times10^{-6}$ \\
PPO clipping band & $[1-0.2,1+0.28]=[0.8,1.28]$ \\
Compared methods & Null; IcePop; TRM; GeoMean; \methodours \\
Baseline settings & IcePop token-ratio interval $[0.5,5.0]$; TRM cutoff $0.05$ \\
GeoMean thresholds & \makecell[l]{Qwen3.5-4B: $\{1.0025,1.005,1.01,1.02,1.03\}$\\
                                  Qwen3.5-9B: $\{1.0025,1.005,1.01,1.02,1.03\}$} \\
\methodours{} thresholds & \makecell[l]{Qwen3.5-4B: $\{1.02,1.025,1.03,1.04\}$\\
                                      Qwen3.5-9B: $\{1.02,1.025,1.03,1.04\}$} \\
\bottomrule
\end{tabular}
\endgroup
\end{table}

\paragraph{Evaluation protocol.}
Both tasks are evaluated with three random seeds. Mathematical-reasoning evaluation uses 16
responses per problem and reports mean@16; code-generation evaluation uses
one response per problem and reports pass@1. Code decoding uses temperature
$0.7$, top-$p=0.9$, top-$k=50$, and a maximum response length of 16,384 tokens.
Generated programs are judged with the same batched SandboxFusion evaluator
across methods.

For each method, math results use the training iterate with the highest
three-year AIME mean@16; code results use the iterate with the highest
LiveCodeBench-v6 validation pass@1. Best- and worst-of-$k$ use the same math iterate
without separate optimization of either metric.
Qwen3.5-4B BeyondAIME results for Null, GeoMean, and \method{} use separate
runs matched in configuration and training iteration.

\paragraph{TRM implementation.}
We follow the sampled-token max-filtering option described in
Section~4.3 and Appendix~E of \citet{li2026trust}. It rejects a response when
$\max_t \frac12(\log r_t)^2>0.05$.
This is the paper's proposed approximation when full logits are unavailable,
not a full-vocabulary KL implementation. All TRM results in this paper refer
to this sampled-token variant.

\FloatBarrier
\clearpage
\section{Threshold Results}
\label{app:threshold_results}

Table~\ref{tab:threshold_sweep} reports the GeoMean and \method{} threshold
sweeps, with AIME mean@16 averaged equally across the three years.

\begin{table}[H]
\centering
\normalsize
\caption{Threshold sweep (\%, higher is better). Each row reports the
three-year AIME mean@16 for that threshold. Bold and underline denote the
best and second-best values within each model block.}
\label{tab:threshold_sweep}
\setlength{\tabcolsep}{3.6pt}
\begin{tabular}{lcc@{\qquad}lcc}
\toprule
\multicolumn{3}{c}{Qwen3.5-4B} & \multicolumn{3}{c}{Qwen3.5-9B} \\
\cmidrule(lr){1-3}\cmidrule(lr){4-6}
Method & $\tau$ & Avg. & Method & $\tau$ & Avg. \\
\midrule
Null & -- & 50.83 & Null & -- & 70.42 \\
GeoMean & 1.0025 & 69.03 & GeoMean & 1.0025 & 71.67 \\
GeoMean & 1.005 & 69.17 & GeoMean & 1.005 & 74.03 \\
GeoMean & 1.01 & 66.74 & GeoMean & 1.01 & 69.65 \\
GeoMean & 1.02 & 52.57 & GeoMean & 1.02 & 71.67 \\
GeoMean & 1.03 & 52.43 & GeoMean & 1.03 & 69.86 \\
\methodours & 1.02 & \bestresult{70.42} & \methodours & 1.02 & 72.99 \\
\methodours & 1.025 & \secondbest{69.24} & \methodours & 1.025 & \bestresult{77.50} \\
\methodours & 1.03 & 67.99 & \methodours & 1.03 & 73.82 \\
\methodours & 1.04 & 66.81 & \methodours & 1.04 & \secondbest{74.17} \\
\bottomrule
\end{tabular}
\end{table}

\section{Additional Training Diagnostics}
\label{app:additional_validation}

Figure~\ref{fig:masked_fraction} gives the time-resolved utilization view used
in Section~\ref{sec:sequence_utilization}. It distinguishes sustained filtering
from a mean dominated by a small number of unusually strict updates, and
contrasts Qwen3.5-9B on mathematical reasoning and code generation.

\begin{figure}[H]
    \centering
    \includegraphics[width=0.92\linewidth]{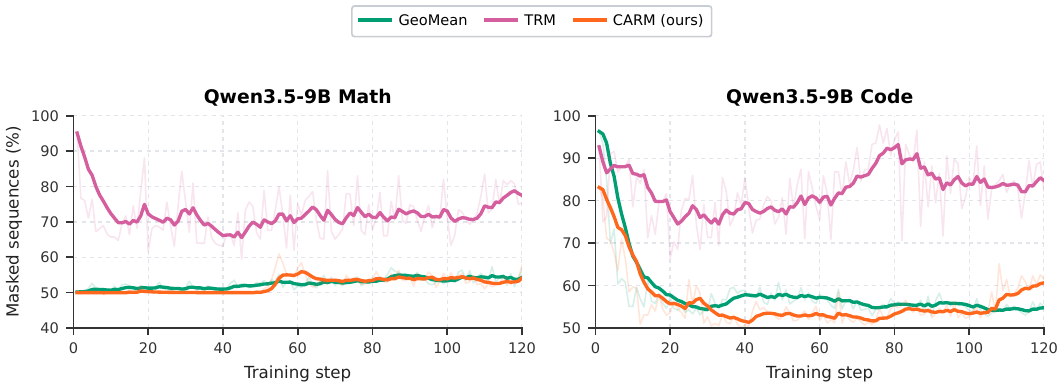}
    \caption{Sequence-masked fraction during training for the selected
    GeoMean, TRM, and \methodours{} configurations. The left and right panels
    show Qwen3.5-9B mathematical-reasoning and code-generation training,
    respectively. Faint curves are raw measurements and opaque curves apply an
    exponential moving average with weight $0.8$.}
    \label{fig:masked_fraction}
\end{figure}

Figure~\ref{fig:training_diagnostics_9b} complements validation accuracy with
generation- and optimization-level measurements. It compares the same five
methods on Qwen3.5-9B mathematical-reasoning training. Faint solid curves show
raw measurements, while opaque solid curves show exponential moving averages
with weight $0.8$. Gradient norm uses a logarithmic axis.

\begin{figure}[H]
    \centering
    \includegraphics[width=\linewidth]{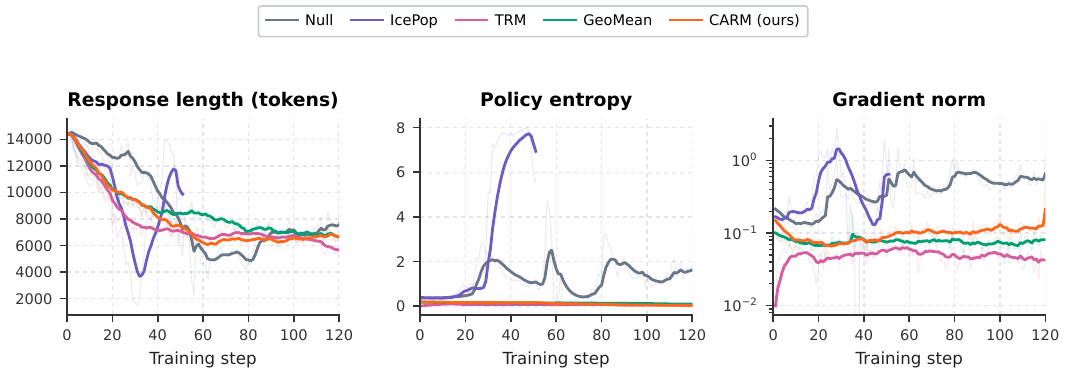}
    \caption{Training diagnostics for Qwen3.5-9B mathematical reasoning.
    Panels show response length, policy entropy, and gradient norm across
    the five methods. Faint curves show raw measurements; opaque curves
    show exponential moving averages with weight $0.8$. Gradient norm is
    displayed on a logarithmic scale.}
    \label{fig:training_diagnostics_9b}
\end{figure}

Figure~\ref{fig:code_training_diagnostics} presents the matched diagnostics
for code-generation training. The same metrics and smoothing convention make
the domain-level differences directly comparable with
Figure~\ref{fig:training_diagnostics_9b}.

\begin{figure}[H]
    \centering
    \includegraphics[width=\linewidth]{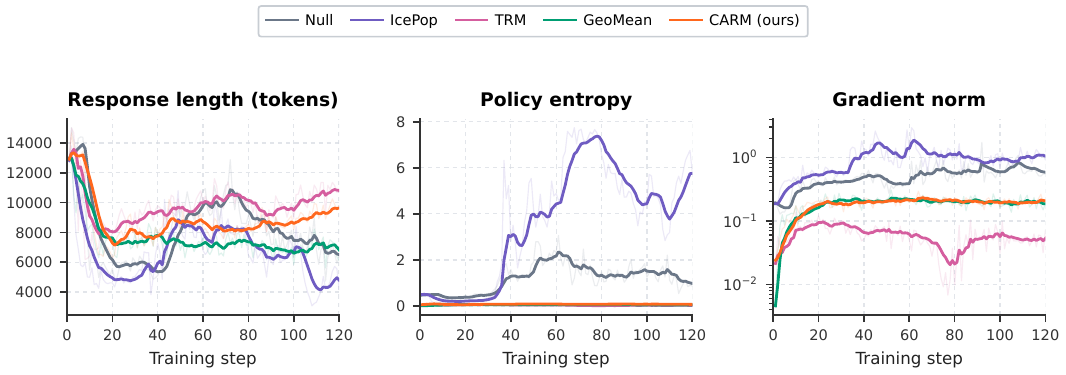}
    \caption{Code-generation training diagnostics: response length, policy
    entropy, and gradient norm for Qwen3.5-9B trained on TACO. Faint and opaque
    curves show raw and smoothed measurements, respectively, using the same
    smoothing and axis conventions as the mathematical-reasoning diagnostics.}
    \label{fig:code_training_diagnostics}
\end{figure}

\FloatBarrier

\section{Best- and Worst-of-\texorpdfstring{$k$}{k} Sampling Metrics}
\label{app:sampling_scaling}

Figure~\ref{fig:sampling_scaling_9b} evaluates inference-time sample scaling
at the same Qwen3.5-9B checkpoints reported in
Table~\ref{tab:sampling_scaling_values}. For each problem, let
$X_1,\ldots,X_{16}\in\{0,1\}$ denote the correctness of the 16 independently
sampled responses and let $c=\sum_{j=1}^{16}X_j$. The per-problem empirical
accuracy is $c/16$, and \texttt{mean@16} averages this quantity over problems
and then equally over AIME 2024, 2025, and 2026.

Best-of-$k$ asks whether \emph{at least one} of $k$ sampled responses is
correct. It captures the opportunity provided by additional inference-time
sampling when an oracle verifier can select a correct candidate. Worst-of-$k$
asks whether \emph{all} $k$ sampled responses are correct. It is therefore a
strict measure of repeatability: a model scores highly only when its stochastic
samples solve the same problem consistently. Worst-of-$k$ should not be read as
robustness to distribution shift or adversarial perturbations. At $k=1$, both
statistics reduce exactly to \texttt{mean@16}, which anchors the two curves at
the same observed single-sample accuracy.

For $k\in\{2,4,8,16\}$, we resample $k$ responses with replacement from the 16
available responses for each problem and apply the maximum (best-of-$k$) or
minimum (worst-of-$k$) correctness operator. The plotted values average 1,000
bootstrap replicates, using the same models as Table~\ref{tab:main_results}.

For a problem with $c$ correct responses among 16, the expected best-of-$k$
value is
\begin{equation}
\widehat{B}_k(c)
=1-\left(1-\frac{c}{16}\right)^k.
\label{eq:bootstrap_best_of_k}
\end{equation}
Equation~\ref{eq:bootstrap_best_of_k} is the probability that at least one of
$k$ independent draws from the empirical response distribution is correct.
It is a with-replacement best-of-$k$ statistic, not the standard
without-replacement pass@$k$ estimator. The corresponding expected
worst-of-$k$ value is
\begin{equation}
\widehat{W}_k(c)
=\left(\frac{c}{16}\right)^k.
\label{eq:bootstrap_worst_of_k}
\end{equation}
Equation~\ref{eq:bootstrap_worst_of_k} is the probability that all $k$ draws
are correct. It is non-increasing in $k$, and strictly decreasing for a
problem when $0<c<16$. As with best-of-$k$, this is a with-replacement statistic
rather than the literal maximum or minimum over a fixed subset of the original
16 responses. Together, the two curves characterize solution coverage and
repeatability. Worst-of-$k$ emphasizes problems with high per-problem
correctness probability; an aggregate increase does not imply improvement
on every problem.

\begin{table}[H]
\centering
\normalsize
\caption{Exact Qwen3.5-9B values underlying
Figure~\ref{fig:sampling_scaling_9b}. All entries are equal-weighted
three-year AIME averages in percent.}
\label{tab:sampling_scaling_values}
\begin{tabular}{llccccc}
\toprule
Metric & Method & $k=1$ & $k=2$ & $k=4$ & $k=8$ & $k=16$ \\
\midrule
\multirow{4}{*}{Best-of-$k$}
& Null & 70.42 & 78.07 & 83.34 & 86.42 & 88.26 \\
& TRM & 73.96 & \secondbest{81.47} & \secondbest{86.43} & \secondbest{89.39} & \secondbest{90.98} \\
& GeoMean & \secondbest{74.03} & 80.19 & 84.05 & 86.73 & 88.54 \\
& \methodours & \bestresult{77.50} & \bestresult{83.94} & \bestresult{87.74} & \bestresult{90.05} & \bestresult{91.28} \\
\midrule
\multirow{4}{*}{Worst-of-$k$}
& Null & 70.42 & 62.70 & 55.30 & 49.04 & 44.38 \\
& TRM & 73.96 & 66.49 & 59.27 & 52.99 & 48.00 \\
& GeoMean & \secondbest{74.03} & \secondbest{67.66} & \secondbest{61.32} & \secondbest{55.02} & \secondbest{49.83} \\
& \methodours & \bestresult{77.50} & \bestresult{71.04} & \bestresult{64.44} & \bestresult{58.05} & \bestresult{52.61} \\
\bottomrule
\end{tabular}
\end{table}

\FloatBarrier

\end{document}